\def\eqref#1{equation~\ref{#1}}
\def\1{\bm{1}}
\DeclareMathAlphabet{\mathsfit}{\encodingdefault}{\sfdefault}{m}{sl}
\SetMathAlphabet{\mathsfit}{bold}{\encodingdefault}{\sfdefault}{bx}{n}
\newcommand{\E}{\mathbb{E}}
\newcommand{\R}{\mathbb{R}}
\DeclareMathOperator*{\argmax}{arg\,max}
\renewcommand{\E}{\mathbb{E}}
\renewcommand{\R}{\mathbb{R}}
\newcommand{\calS}{\mathcal{S}}
\newcommand{\calA}{\mathcal{A}}
\newcommand{\M}{\mathscr{M}}
\newtheorem{proposition}{Proposition}
\title{Maximum Reward Formulation In Reinforcement Learning}
\author[1]{Sai Krishna Gottipati}
\author[1,2]{Yashaswi Pathak$^*$}
\author[3]{Rohan Nuttall}
\author[3]{Sahir}
\author[6]{Raviteja Chunduru}
\author[6]{Ahmed Touati}
\author[5]{Sriram Ganapathi Subramanian}
\author[3,8]{Matthew E.~Taylor}
\author[6,7,8]{Sarath Chandar}
\affil[1]{99andBeyond}
\affil[2]{International Institute of Information Technology, Hyderabad}
\affil[3]{Department of Computing Science, University of Alberta}
\affil[4]{Alberta Machine Intelligence institute}
\affil[5]{University of Waterloo}
\affil[6]{Mila - Quebec AI Institute}
\affil[7]{Canada CIFAR AI Chair}
\affil[8]{Ecole Polytechnique Montréal}
\begin{document}
\def\thefootnote{*}\footnotetext{Work done during internship at 99andBeyond.}
\maketitle

\begin{abstract}

Reinforcement learning (RL) algorithms typically deal with maximizing the expected cumulative return (discounted or undiscounted, finite or infinite horizon). However, several crucial applications in the real world, such as drug discovery, do not fit within this framework because an RL agent only needs to identify states (molecules) that achieve the highest reward within a trajectory and does not need to optimize for the expected cumulative return. In this work, we formulate an objective function to maximize the expected maximum reward along a trajectory, propose a novel functional form of the Bellman equation, introduce the corresponding Bellman 
operators, and provide a proof of convergence. Using this formulation, we achieve state-of-the-art results on the task of molecule generation that mimics a real-world drug discovery pipeline.
\end{abstract}

\section{Introduction}

Reinforcement learning (RL) algorithms typically try to maximize the cumulative finite horizon undiscounted return, $R(\tau)=\sum_{t=0}^{T} r_{t}$, or the infinite horizon discounted return, $R(\tau)=\sum_{t=0}^{\infty} \gamma^t r_{t}$. $r_t$ is the reward obtained at time step $t$, $\gamma$ is the discount factor in the range $[0,1)$, and $\tau$ is the agent's trajectory. $\tau$ consists of actions ($a$) sampled from the policy ($\pi(\cdot \mid s)$) and states ($s'$) sampled from the probability transition function ($P(s'|s,a)$) of the underlying Markov Decision Process (MDP). 

The action-value function $Q^\pi (s,a)$ for a policy $\pi$ is given by 
\begin{equation*}
Q^{\pi}(s, a)=\underset{\tau \sim \pi}{\E}\left[R(\tau) | (s_{0}, a_{0})=(s, a) \right]
\end{equation*}
The corresponding Bellman equation for $Q^\pi (s,a)$ with the expected return defined as $R(\tau)=\sum_{t=0}^{\infty} \gamma^t r_{t}$ is
\begin{equation*}
Q^{\pi}(s_t, a_t)=\E_{ \substack{ s_{t+1} \sim P(\cdot \mid s_t, a_t) \\ a_{t+1} \sim \pi(\cdot \mid s_{t+1})}} \left[r(s_t, a_t)+\gamma Q^{\pi}\left(s_{t+1}, a_{t+1} \right)\right]
\end{equation*}

This Bellman equation has formed the foundation of RL. However, we argue that
optimizing for only the maximum reward achieved in an episode is also an important goal. Reformulating the RL problem to achieve the largest reward in an episode is the focus of this paper, along with empirical demonstrations in one toy and one real-world domain.

In the \emph{de novo} drug design pipeline, molecule generation tries to maximize a given reward function.
Existing methods either optimize for the expected cumulative return, or for the reward at the end of the episode, and thus fail to optimize for the very high reward molecules that may be encountered in the middle of an episode. This limits the potential of several of these reinforcement learning based drug design algorithms. We overcome this limitation by proposing a novel functional formulation of the Bellman equation:
\begin{equation}
\label{eq-new-Bellman}
        Q_{\max}^\pi(s_t, a_t) = \E_{ \substack{ s_{t+1} \sim P(\cdot \mid s_t, a_t) \\ a_{t+1} \sim \pi(\cdot \mid s_{t+1})}} \left[ \max \left( r(s_t, a_t), \gamma Q_{\max}^\pi(s_{t+1}, a_{t+1}) \right) \right] 
\end{equation}

Other use cases of this formulation (i.e, situations where the single best reward found, rather than the total rewards, are important) are - symbolic regression (\citet{petersen2020deep}, \citet{Tegmark2020}), which is interested in finding the single best model, active localization~(\citet{singh2018active}, \citet{dal}) must find the robot's one most likely pose, green chemistry~(\citet{greenchem800474}) wants to identify the one best product formulation, and other domains that use RL for generative purposes. Furthermore, to avoid dire climate consequences from greenhouse gases in the atmosphere, our formulation could be used to help search for the best potential CO2 sequestration locations by leveraging data from seismograph traces (\citet{rolnick2019tackling}).

This paper's contributions are to:
\begin{itemize}
    \item Derive a novel functional form of the Bellman equation, called max-Bellman, to optimize for the maximum reward in an episode.
    \item Introduce the corresponding evaluation and optimality operators, and prove the convergence of Q-learning with the max-Bellman formulation.
    \item Test on a toy environment and draw further insights with a comparison between Q-learning and Q-learning with our max-Bellman formulation.
    \item Use this max-Bellman formulation to generate synthesizable molecules in an environment that mimics the real drug discovery pipeline, and demonstrate significant improvements over the existing state-of-the-art methods. 
\end{itemize}

\section{Related work}

This section briefly introduces fundamental RL concepts and the paper's main application domain.

\subsection{Reinforcement Learning}

Bellman's dynamic programming paper (\citet{Bellman54}) introduced the notions of optimality and convergence of functional equations. This has been applied in many domains, from control theory to economics. The concept of an MDP was proposed in the book Dynamic Programming and Markov Processes (\citet{mdp}) (although some variants of this formulation already existed in the 1950s). These two concepts of Bellman equation and MDP are the foundations of modern RL. Q-learning was formally introduced in \citet{Qlearning-Dayan92} and different convergence guarantees were further developed in \citet{Qlearning-satinder93} and \citet{Szepesvari97}. Q-learning convergence to the optimal Q-value ($Q^\star$) has been proved under several important assumptions. One fundamental assumption is that the environment has finite (and discrete) state and action spaces and each of the states and actions can be visited infinitely often. The learning rate assumption is the second important assumption, where the sum of learning rates over infinite episodes is assumed to go to infinity in the limit, whereas the sum of squares of the learning rates are assumed to be a finite value (\citet{tsitsiklis1994asynchronous,kamihigashi2015necessary}). Under similar sets of assumptions, the on-policy version of Q-learning, known as Sarsa, has also been proven to converge to the optimal Q-value in the limit (\citet{singh2000convergence}).  

Recently, RL algorithms have seen large empirical successes as neural networks started being used as function approximators (\citet{mnih2016asynchronous}). Tabular methods cannot be applied to large state and action spaces as these methods are linear in the state space and polynomial in the action spaces in both time and memory. Deep reinforcement learning (DRL) methods on the other hand, can approximate the $Q$-function or the policy using neural networks, parameterized by the weights of the corresponding neural networks. In this case, the RL algorithms are easily generalized across states, which improves the learning speed (time complexity) and sample efficiency of the algorithm. Some popular Deep RL algorithms include DQN (\citet{mnih2015human}), PPO (\citet{schulman2017proximal}), A2C (\citet{mnih2016asynchronous}), SAC (\citet{haarnoja2018soft}), TD3 (\citet{fujimoto2018addressing}), etc. 

\subsection{De novo drug design}

\emph{De novo} drug design is a well-studied problem and has been tackled by several methods, including evolutionary algorithms (\citet{brown2004graph}, \citet{jensen2019graph}, \citet{genetic-ahn2020}), generative models (\citet{simonovsky2018graphvae, gomez2018automatic, winter2019efficient, jtvae, popova2018deep, Ryan2020, olivecrona2017molecular}), and reinforcement learning based approaches (\citet{gcpn, MolDQN}). While the effectiveness of the generated molecules using these approaches has been demonstrated on standard benchmarks such as Guacamol (\cite{brown2019guacamol}), the issue of synthesizability remains a problem.
While all the above approaches generate molecules that optimize a given reward function, they do not account for whether the molecules can actually be effectively synthesized, an important practical consideration. \citet{connornew} further highlighted this issue of synthesizability by using a synthesis planning program to quantify how often the molecules generated using these existing approaches can be readily synthesized. To attempt to solve this issue, \citet{bradshaw2019} used a variational auto-encoders based approach to optimize the reward function with single-step reactions. \citet{korovina} employed a random selection of reactants and reaction conditions at every time step of a multi-step process. PGFS (policy gradient for forward synthesis) from \citet{gottipati2020learning} generates molecules via multi-step chemical synthesis and simultaneously optimized for the given reward function. PGFS leveraged TD3 algorithm (\citet{TD3}), and like existing approaches, optimizes for the usual objective of total expected discounted return.

\section{Method}
This section briefly describes the previous attempts at optimizing for the maximum reward in an episode, defines the $Q$-function and new functional form of the Bellman equation, defines the corresponding max-Bellman operators, and proves its convergence properties. 

\citet{QUAH2006351} also try to formulate a maximum reward objective function. They define the $Q$-function and derive the max-Bellman equation in the following way (rephrased according to this paper's notation):
\begin{equation}
\label{eq-q-def-old}
         Q_{\max}^\pi(s_t, a_t) \triangleq \E \left[ \max_{t^\prime \geq t} \left ( \gamma^{t^\prime - t} r(s_{t^\prime}, a_{t^\prime}) \right) \mid (s_t, a_t), \pi \right ], \forall (s_t, a_t) \in \calS \times \calA
\end{equation}
Then, the corresponding functional form of Bellman equation (i.e., the max-Bellman formulation) was derived as follows:
\begin{equation}
\label{eq-Bellman-derivation}
    \begin{split}
        Q_{\max}^\pi(s_t, a_t) & = \E \left[ \max_{t^\prime \geq t} \left ( \gamma^{t^\prime - t} r(s_{t^\prime}, a_{t^\prime}) \right) \mid (s_t, a_t), \pi \right ] \\
        & = \E_{ \substack{ s_{t+1} \sim P(\cdot \mid s_t, a_t) \\ a_{t+1} \sim \pi(\cdot \mid s_{t^\prime})}} \max \left( r(s_t, a_t), \E \left[ \max_{t^\prime \geq t+1} \left( \gamma^{t^\prime - t} r(s_{t^\prime}, a_{t^\prime}) \right)   \mid (s_{t+1}, a_{t+1}), \pi \right] \right) \\
        & = \E_{ \substack{ s_{t+1} \sim P(\cdot \mid s_t, a_t) \\ a_{t+1} \sim \pi(\cdot \mid s_{t+1})}} \max \left( r(s_t, a_t), \gamma Q_{\max}^\pi(s_{t+1}, a_{t+1}) \right)
    \end{split}
\end{equation}
 
 However, the second equality is incorrect, which can be shown with a counter example. Consider an MDP with four states and a zero-reward absorbing state. Assume a fixed deterministic policy and $\gamma$ close to 1. Let $s_1$ be the start state, with deterministic reward $r(s_1)=1$. With probability 1, $s_1$ goes to $s_2$. Let $r(s_2)=0$, and with equal probability 0.5 it transitions to either $s_3$ or $s_4$. The immediate reward $r(s_3)=2$ and $r(s_4)=0$, after that it always goes to the absorbing state with 0 rewards thereafter. Since there can be only 2 trajectories, one with a maximum reward of 2 and the other with a maximum reward of 1, we have
 $Q^\pi_{max}(s_1)=1.5$, but $Q^\pi_{max}(s_2)=1$, and $\max(r(s_1),Q^\pi_{max}(s_2))=1 \neq 1.5$. This is because the expectation and max operators are not interchangeable. 

While the counter example above is not applicable in the current setting of PGFS (as it has a deterministic environment and uses a deterministic policy gradient algorithm (TD3)), the definition of the $Q$-function given in equation-\ref{eq-q-def-old} does not generalize to stochastic environments. Moreover, in order to be able to accommodate multiple possible products of a chemical reaction and to truly leverage a stochastic environment (and stochastic policy gradient algorithms), the $Q$-function should be able to recursively optimize for the expectation of maximum of reward at the current time step and future expectations. Therefore, we define the $Q$-function as: 

\begin{equation}
\label{eq-Bellman-def2}
        Q_{\max}^\pi(s_t, a_t) = \E_{ \substack{ s_{t+1} \sim P(\cdot \mid s_t, a_t) \\ a_{t+1} \sim \pi(\cdot \mid s_{t+1})}} \left[ \max \left( r(s_t, a_t), \gamma \E_{ \substack{ s_{t+2} \sim P(\cdot \mid s_{t+1}, a_{t+1})  \\ a_{t+2} \sim \pi(\cdot \mid s_{t+2})}} \left[ \max \left( r(s_{t+1}, a_{t+1}),... \right) \right]  \right)\right]
\end{equation}

While the work of \citet{QUAH2006351} is focused on deriving the various forms of the max-Bellman equations, the major contributions of our paper are to 1) propose the max-Bellman equation with a particular motivation towards drug discovery, 2) define the corresponding operators, 3) provide their convergence proofs, and 4) validate the performance on a toy domain and a real world domain (de-novo drug design). 

Based on Equation \ref{eq-Bellman-derivation}, we can now define the \textit{max-Bellman} evaluation operator and the \textit{max-Bellman} optimality operator. For any function $Q : \calS \times A \rightarrow \R$ and for any state-action pair $(s, a)$,
\begin{align*}
    (\M^\pi Q)(s, a) & \triangleq \max \left ( r(s, a), \gamma \E_{ \substack{ s' \sim P(\cdot \mid s, a) \\ a' \sim \pi(\cdot \mid s')}} \left[ Q(s', a')\right] \right ) \\
    (\M^\star Q)(s, a) & \triangleq \max \left ( r(s, a), \gamma \E_{ \substack{ s' \sim P(\cdot \mid s, a)}} \left[ \max_{a' \in \calA} Q(s', a')\right] \right )
\end{align*}
$\M^\star$ and $\M^\pi : (\calS \times \calA \rightarrow \R) \rightarrow (\calS \times \calA \rightarrow \R)$  are operators that takes in a $Q$ function and returns another modified $Q$ function by assigning the $Q$ value of the state $s$, action $a$, to be the maximum of the reward obtained at the same state and action ($s$, $a$) and the discounted future expected $Q$ value.

\begin{proposition}The operators have the following properties.
\begin{itemize}
    \item Monotonicity: let $Q_1, Q_2: \calS \times \calA \rightarrow \R$ such that $Q_1 \geq Q_2$ element-wise. Then,
    \begin{equation*}
        \M^\pi Q_{1} \geq \M^\pi Q_{2} \text{ and } \M^\star Q_{1} \geq \M^\star Q_{2}
    \end{equation*}
    \item Contraction: both operators are $\gamma$-contraction in supremum norm i.e, for any $Q_{1}, Q_{2}: \calS \times \calA \rightarrow \R$,
    \begin{align}
        \|\M^\pi Q_{1} - \M^\pi Q_{2} \|_\infty & \leq  \gamma \| Q_{1} - Q_{2} \|_\infty \\
        \|\M^\star Q_{1} - \M^\star Q_{2} \|_\infty & \leq  \gamma \| Q_{1} - Q_{2} \|_\infty
    \end{align}
\end{itemize}
\label{prop-1}
\end{proposition}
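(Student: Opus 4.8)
The plan is to reduce the whole statement to two elementary facts about the real-valued $\max$ function, after which monotonicity of the expectation does the rest. First I would record, as a preliminary observation, that (a) for every fixed $c \in \R$ the map $x \mapsto \max(c,x)$ is nondecreasing, and (b) it is $1$-Lipschitz, i.e.\ $|\max(c,x)-\max(c,y)| \le |x-y|$ for all $x,y$; and, likewise, that $(x_{a'})_{a'\in\calA} \mapsto \max_{a'\in\calA} x_{a'}$ is nondecreasing and satisfies $\bigl|\max_{a'} x_{a'} - \max_{a'} y_{a'}\bigr| \le \max_{a'}|x_{a'}-y_{a'}|$. All of these follow by a one-line case analysis, and the attainment of $\max_{a'\in\calA}$ that (b) tacitly uses is guaranteed by the finiteness of $\calA$ assumed for the convergence results.

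\textbf{Monotonicity.} Suppose $Q_1 \ge Q_2$ element-wise. For a fixed $(s,a)$, monotonicity of the expectation gives $\E_{s',a'}[Q_1(s',a')] \ge \E_{s',a'}[Q_2(s',a')]$ (with $s'\sim P(\cdot\mid s,a)$, $a'\sim\pi(\cdot\mid s')$), and pointwise in $s'$ we have $\max_{a'}Q_1(s',a') \ge \max_{a'}Q_2(s',a')$, hence the same after taking expectation over $s'$. Multiplying by $\gamma\ge 0$ preserves both inequalities, and applying $\max(r(s,a),\cdot)$ --- nondecreasing in its second argument by (a) --- yields $(\M^\pi Q_1)(s,a)\ge(\M^\pi Q_2)(s,a)$ and $(\M^\star Q_1)(s,a)\ge(\M^\star Q_2)(s,a)$. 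As $(s,a)$ was arbitrary, this is the claimed inequality.

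\textbf{Contraction.} Fix $(s,a)$. For $\M^\pi$, apply (b) with $c=r(s,a)$, then factor out $\gamma$ and move the absolute value inside the expectation via the triangle inequality:
\[
\bigl|(\M^\pi Q_1)(s,a)-(\M^\pi Q_2)(s,a)\bigr|
\;\le\; \gamma\,\E_{s',a'}\bigl[\,\bigl|Q_1(s',a')-Q_2(s',a')\bigr|\,\bigr]
\;\le\; \gamma\,\|Q_1-Q_2\|_\infty.
\]
For $\M^\star$, insert one further use of the second estimate in (b), namely $\bigl|\max_{a'}Q_1(s',a')-\max_{a'}Q_2(s',a')\bigr|\le\|Q_1-Q_2\|_\infty$, before taking the expectation over $s'$; the remaining steps are identical. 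Taking the supremum over $(s,a)$ on the left-hand side gives the two stated $\gamma$-contraction bounds in supremum norm.

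There is no genuinely hard step here --- the argument is bookkeeping on top of the two $\max$-inequalities. The only points I would be careful to state are the well-definedness of the expectations (it is enough that $r$ is bounded and that we restrict to bounded $Q$, so that $\|Q_1-Q_2\|_\infty<\infty$ and the triangle-inequality bound inside the expectation is legitimate) and, for $\M^\star$, the attainment of the inner maximum over $\calA$; both are covered by the standing assumption of finite state and action spaces used for the convergence results.
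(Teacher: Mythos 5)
Your proof is correct and follows essentially the same route as the paper's: both arguments reduce to the monotonicity and $1$-Lipschitz property of $x \mapsto \max(c,x)$ together with the non-expansiveness of the expectation and of $\max_{a'}$ (the paper merely derives the Lipschitz bound explicitly via the identity $\max(x,y) = 0.5(x+y+|x-y|)$, where you cite it as an elementary fact). Your added remarks on boundedness and attainment of the inner maximum are sensible housekeeping but do not change the substance.
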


\begin{proof}
We will provide a proof only for $\M^\star$. The proof of $\M^\pi$ is similar and is provided in Section \ref{sec:proof-m-pi} of the Appendix.

\paragraph{\underline{Monotonicity:}} Let $Q_{1}$ and $Q_{2}$ be two functions such that $Q_{1}(s, a) \geq Q_{2}(s, a)$ for any state-action pair $(s, a) \in \calS \times \calA$.  We then have:
\begin{equation*}
    \max_{a' \in \calA} Q_{1}(s, a') \geq Q_{2}(s, a), \forall (s, a)
\end{equation*}
 By the definition of $\max$, we obtain:
 \begin{equation*}
     \max_{a' \in \calA} Q_{1}(s, a') \geq \max_{a' \in \calA}Q_{2}(s, a'), \forall s
 \end{equation*}
 and by linearity of expectation, we have:
 \begin{equation*}
     \gamma \E_{ \substack{ s' \sim P(\cdot \mid s, a)}} \left[ \max_{a' \in \calA} Q_{1}(s', a')\right] \geq \gamma \E_{ \substack{ s' \sim P(\cdot \mid s, a)}} \left[ \max_{a' \in \calA} Q_{2}(s', a')\right], \forall (s, a)
 \end{equation*}
 Since, 
 \begin{equation*}
     \M^\star Q_{1}(s,a) \geq \gamma \E_{ \substack{ s' \sim P(\cdot \mid s, a)}} \left[ \max_{a' \in \calA} Q_{1}(s', a')\right]
 \end{equation*}
 we get:
 \begin{equation*}
     \M^\star Q_{1}(s,a) \geq \gamma \E_{ \substack{ s' \sim P(\cdot \mid s, a)}} \left[ \max_{a' \in \calA} Q_{2}(s', a')\right]
 \end{equation*}
 Moreover, because $\M^\star Q_{1}(s,a) \geq r(s, a)$, we obtain :
 \begin{equation*}
     \M^\star Q_{1}(s,a) \geq \max \left( r(s, a), \gamma \E_{ \substack{ s' \sim P(\cdot \mid s, a)}} \left[ \max_{a' \in \calA} Q_{2}(s', a')\right] \right) = \M^\star Q_{2}(s, a)
 \end{equation*}
which is the desired result.

\paragraph{\underline{Contraction:}} Denote $f_i(s, a) = \gamma \E_{ \substack{ s' \sim P(\cdot \mid s, a)}} \left[ \max_{a' \in \calA} Q_{i}(s', a')\right]$, the expected action-value function of the next state. By using the fact that $\max(x, y) = 0.5 (x+y + |x-y|), \forall (x, y) \in \R^2$, we obtain: 

\begin{align*}
    \max(r, f_1) - \max(r, f_2)
    & = 0.5 \left(r + f_1 + |r - f_1| \right) - 0.5 \left(r + f_2 + |r-f_2| \right) \\
    & = 0.5 \left(  f_1 - f_2 + |r - f_1| - |r-f_2| \right) \\
    & \leq  0.5 \left( f_1 - f_2 + | r - f_1 - (r-f_2) | \right) \\
    & = 0.5  \left( f_1 - f_2 + |f_1 -f_2 | \right) \\
    & \leq |f_1 -f_2 |
\end{align*}
Therefore
\begin{align*}
    \|\M^\star Q_{1} - \M^\star Q_{2} \|_\infty & = \|\max(r, f_1) - \max(r, f_2) \|_\infty \\
    & \leq  \| f_1 - f_2  \|_\infty \\ 
    & \leq  \gamma \| \max_{a'} Q_{1}(\cdot, a') - \max_{a'} Q_{2}(\cdot, a')  \|_\infty \\
    & \leq \gamma \| Q_{1} - Q_{2} \|_\infty \tag{$\max$ is a non-expansion}
\end{align*}

\end{proof}
The left hand side of this equation represents the largest difference between the two $Q$-functions. Recall that $\gamma$ lies in the range $[0,1)$ and hence all differences are guaranteed go to zero in the limit. The Banach fixed point theorem then lets us conclude that the operator $\M^\star$ admits a fixed point.

We denote the fixed point of $\M^\star$ by $Q{^\star}_{\max}$. Based on~\eqref{eq-Bellman-derivation}, we see that $Q^{\pi}_{\max}$ is the fixed point of $\M^{\pi}$. In the next proposition, we will prove that $Q^{\star}_{\max}$ corresponds to the optimal action-value function in the sense that $Q^{\star}_{\max} = \max_{\pi} Q^{\pi}_{\max}$.

\begin{proposition}[Optimal policy] The deterministic policy $\pi^\star$ is defined as $\pi^\star(s) = \arg \max_{a \in \calA} Q^\star_{\max}(s, a)$. $\pi^\star$, the greedy policy with respect to $Q^\star_{\max}$, is the optimal policy and for any stationary policy $\pi$, $Q^{\pi^\star}_{\max} = Q^\star_{\max} \geq Q^\pi_{\max}$.
\label{prop-2}
\end{proposition}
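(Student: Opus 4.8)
The plan is to prove the two assertions $Q^{\pi^\star}_{\max}=Q^\star_{\max}$ and $Q^\star_{\max}\ge Q^\pi_{\max}$ for every stationary $\pi$ separately, using Proposition~\ref{prop-1} (monotonicity and $\gamma$-contraction of $\M^\pi$ and $\M^\star$) together with the Banach fixed point theorem, exactly in the spirit of the classical optimality proof for Q-learning.

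\textbf{Step 1: $Q^{\pi^\star}_{\max}=Q^\star_{\max}$.} I would first observe that $Q^\star_{\max}$ is a fixed point of the evaluation operator $\M^{\pi^\star}$. Since $\pi^\star(s)\in\arg\max_{a\in\calA}Q^\star_{\max}(s,a)$, for every state $s'$ we have $\E_{a'\sim\pi^\star(\cdot\mid s')}[Q^\star_{\max}(s',a')]=\max_{a'\in\calA}Q^\star_{\max}(s',a')$, and therefore $\M^{\pi^\star}Q^\star_{\max}=\M^\star Q^\star_{\max}=Q^\star_{\max}$, the last equality holding because $Q^\star_{\max}$ is by definition the fixed point of $\M^\star$. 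Since $\M^{\pi^\star}$ is a $\gamma$-contraction in $\|\cdot\|_\infty$ by Proposition~\ref{prop-1}, the Banach fixed point theorem gives it a \emph{unique} fixed point; by Equation~\eqref{eq-Bellman-derivation} that fixed point is $Q^{\pi^\star}_{\max}$. Hence $Q^{\pi^\star}_{\max}=Q^\star_{\max}$.

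\textbf{Step 2: $Q^\star_{\max}\ge Q^\pi_{\max}$ for all stationary $\pi$.} The key elementary fact is the pointwise domination $\M^\pi Q\le\M^\star Q$ for every $Q:\calS\times\calA\to\R$ and every stationary $\pi$: for each $s'$, $\E_{a'\sim\pi(\cdot\mid s')}[Q(s',a')]\le\max_{a'\in\calA}Q(s',a')$; applying $\gamma\,\E_{s'\sim P(\cdot\mid s,a)}[\cdot]$ and then $\max$ with $r(s,a)$ both preserve the inequality. Taking $Q=Q^\star_{\max}$ and using its fixed-point property yields $\M^\pi Q^\star_{\max}\le\M^\star Q^\star_{\max}=Q^\star_{\max}$. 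Now iterate: by monotonicity of $\M^\pi$ (Proposition~\ref{prop-1}), applying $\M^\pi$ to $\M^\pi Q^\star_{\max}\le Q^\star_{\max}$ gives $(\M^\pi)^2 Q^\star_{\max}\le\M^\pi Q^\star_{\max}\le Q^\star_{\max}$, and by induction $(\M^\pi)^{n+1}Q^\star_{\max}\le(\M^\pi)^n Q^\star_{\max}\le\cdots\le Q^\star_{\max}$ for all $n$. The $\gamma$-contraction property ensures $(\M^\pi)^n Q^\star_{\max}\to Q^\pi_{\max}$ in $\|\cdot\|_\infty$, hence pointwise, so passing to the limit in $(\M^\pi)^n Q^\star_{\max}\le Q^\star_{\max}$ gives $Q^\pi_{\max}\le Q^\star_{\max}$. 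Combining with Step~1 and noting that $\pi^\star$ is itself a stationary policy, we conclude $Q^{\pi^\star}_{\max}=Q^\star_{\max}=\max_\pi Q^\pi_{\max}\ge Q^\pi_{\max}$.

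\textbf{Main obstacle.} The argument is structurally routine, so the only points requiring care are: (i) ensuring $\arg\max_{a\in\calA}Q^\star_{\max}(s,a)$ is attained, so that $\pi^\star$ is well defined — this is exactly where the standing assumption of a finite action space $\calA$ is used (or, in a continuous setting, compactness of $\calA$ together with appropriate continuity of $Q^\star_{\max}$); and (ii) cleanly justifying the exchange of limit and inequality in Step~2, which reduces to the fact that the iterates of a $\gamma$-contraction converge to its unique fixed point, already supplied by Proposition~\ref{prop-1} plus Banach. Thus no step is genuinely difficult; the bookkeeping around the monotone sequence $(\M^\pi)^n Q^\star_{\max}$ and the passage to its limit is the part most worth writing out explicitly.
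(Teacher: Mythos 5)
Your proof is correct and follows essentially the same route as the paper: Step 1 is identical (greediness gives $\M^{\pi^\star}Q^\star_{\max}=\M^\star Q^\star_{\max}=Q^\star_{\max}$, and uniqueness of the fixed point of the contraction $\M^{\pi^\star}$ identifies $Q^{\pi^\star}_{\max}$ with $Q^\star_{\max}$). The only difference is in Step 2, where the paper iterates both operators from an arbitrary $Q$ and compares $(\M^\star)^n Q \ge (\M^\pi)^n Q$ before passing to the limit, whereas you iterate only $\M^\pi$ starting from $Q^\star_{\max}$ to get a monotone sequence dominated by $Q^\star_{\max}$ --- a cosmetic rearrangement of the same monotonicity-plus-contraction argument.
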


\begin{proof} By the definition of greediness and the fact that $Q^{\star}$ is the fixed point of $\M^{\star}$, we have: $\pi^{\star} (s) = \arg \max_{a \in \calA} Q^{\star}_{\max} (s, a) \Rightarrow \M^{\pi^{\star}} Q^{\star} = \M^{\star} Q^{\star}_{\max} = Q^{\star}_{\max}$. This proves that $Q^{\star}_{\max}$ is the fixed point of the evaluation operator $\M^{\pi^\star}$, which implies that $Q^{\star}_{\max}$ is the action-value function of $\pi^\star$.

For any function $Q: \calS \times \calA \rightarrow \R$ and any policy $\pi$, we have $\M^\star Q \geq \M^\pi Q$. Using monotonicity, we have
\begin{align}
    (\M^\star)^2 Q = \M^\star (\M^\star Q) \geq \M^\star (\M^\pi Q) \geq \M^\pi (\M^\pi Q) = (\M^\pi)^2 Q.
\end{align}
We then use induction to show that for any $n \geq 1$, $(\M^\star)^n Q \geq (\M^\pi)^n Q$. As both operators are contractions, by the fixed-point theorem, $(\M^\star)^n Q$ and $(\M^\pi)^n Q$ converge to $Q^\star_{\max}$ and $Q^\pi_{\max}$, respectively, when $n$ goes to infinity. We conclude then that $Q^\star_{\max} \geq Q^\pi_{\max}$.
\end{proof}

Thus, we have shown that optimality operator defined based on our novel formulation of Bellman equation converges to a fixed point (Proposition \ref{prop-1}) and that fixed point is the optimal policy (Proposition \ref{prop-2}).

\section{Experiments}
\label{sec:experiments}

This section shows the benefits of using max-Bellman in a simple grid world and in the real-world domain of drug discovery.

\subsection{Toy example - Gold mining environment}
\label{sec:v2_exp-toy-example}

Our first demonstration is on a toy domain with multiple goldmines in a 3 $\times$ 12 grid \footnote{The environment, algorithm code and plots can be found at this url: \url{https://github.com/99andBeyond/max-bellman-toy}}. The agent starts in the bottom left corner and at each time step, can choose from among the four cardinal actions: up, down, left and right. The environment is deterministic with respect to the action transitions and the agent cannot leave the grid. All states in the grid that are labeled with non-zero values represent goldmines and each value represents the reward that the agent will collect upon reaching that particular state. Transitions into a non-goldmine state results in a reward of -1. A goldmine's reward can be accessed only once and after it has been mined, its goldmine status is revoked, and the reward received upon further visitation is -1. The episode terminates after 11 time steps and the discount factor is $\gamma = 0.99$. The observation is a single integer denoting the state number of the agent. Thus, this is a non-Markovian setting since the environment observation does not communicate which goldmines have already been visited, and also because the time step information is not included in the state observation.

The environment is shown in Figure \ref{v2_goldmining_env_viz}. If the agent goes up to the top row and continues to move right, it will only get a cumulative return of 26.8. If it instead traverses the bottom row to the right, it can receive a cumulative return of 27.5, which is the highest cumulative return possible in this environment.

\begin{figure}[!h]
\centering
\includegraphics[scale=0.51]{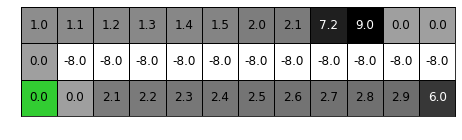}
\caption{A visualization of the gold-mining toy example. The bottom left state, shown in green, denotes the starting state. The non-zero values denote the goldmine rewards.}
\label{v2_goldmining_env_viz}
\end{figure}

We test both Q-learning and Max-Q (i.e. Q-learning based on our proposed max-Bellman formulation), on this environment. As usual, the one step TD update rule for Q-learning is:
\begin{equation*}
Q(s_t,a_t) = Q(s_t,a_t) + \alpha(r_t+\gamma \max_a Q(s_{t+1},a) - Q(s_t,a_t))
\end{equation*}

The one step TD update rule for Max-Q is:
\begin{equation*}
Q(s_t,a_t) = Q(s_t,a_t) + \alpha(\max(r_t,\gamma \max_a Q(s_{t+1},a)) - Q(s_t,a_t))
\end{equation*}
For both algorithms, we use learning rate $\alpha = 0.001$, and decay epsilon from 0.2 to 0.0 linearly over 50000 episodes, after which $\epsilon$ remains fixed at 0. Figure \ref{v2_action_wise_q_values} shows the learned Q-values and Figure \ref{v2_learned_policies} shows a difference in the final behavior learned by the two algorithms. Q-learning seems to prefer the path with highest cumulative return (along the bottom row) while Max-Q prefers the path with highest maximum reward (reward of $+9$ in the top row). 
The learned policies consistently reflect this behavior. Over 10 independent runs of each algorithm, Q-learning's policy always converges to moving along the bottom row and achieves expected cumulative return of 27.5. On the other hand, Max-Q always prefers the top row and achieves expected cumulative return of 26.8, but accomplishes its goal of maximizing the expected maximum reward in an episode (i.e, reaching the highest rewarding state). Also, Max-Q has worse initial learning performance in terms of the cumulative return, which can be explained by the agent wanting to move from the bottom row to the top row, despite the -8 penalty. This desire to move upwards at any cost is because the agent is pulled towards the $+9$, and does not care about any intermediate negative rewards that it may encounter.

\begin{figure}[!t]
    \centering
    \begin{subfigure}{0.05\linewidth}
      \caption{}
      \label{v2_action_wise_q_values_a}
    \end{subfigure}
    \begin{subfigure}{0.94\linewidth}
        \includegraphics[width=\linewidth]{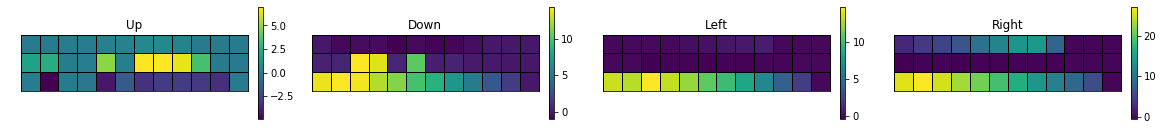}
    \end{subfigure}

    \begin{subfigure}{0.05\linewidth}
      \caption{}
      \label{v2_action_wise_q_values_b}
    \end{subfigure}
    \begin{subfigure}{0.94\linewidth}
        \includegraphics[width=\linewidth]{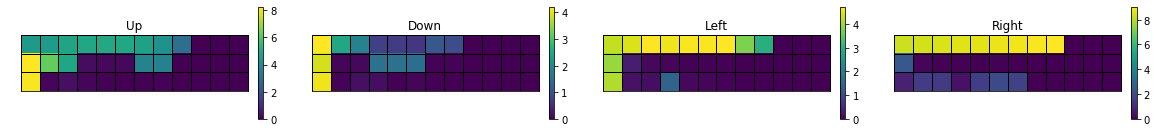}
    \end{subfigure}
    
    \caption{The learned q-values for \textbf{(a)} Q-learning \textbf{(b)} Max-Q}
    \label{v2_action_wise_q_values}
\end{figure}

\begin{figure}[!t]
  \centering
  \subfloat[][Q-learning]{\includegraphics[scale=0.4]{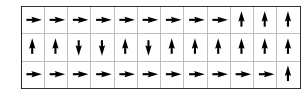} \label{v2_learned_policies_a}}
  \subfloat[][Max-Q]{\includegraphics[scale=0.4]{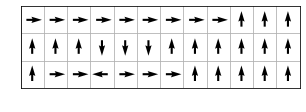}\label{v2_learned_policies_b}}
\caption{Learned policies}
\label{v2_learned_policies}
\end{figure}

\begin{figure}[!t]
  \centering
  \subfloat[][Average return]{\includegraphics[scale=0.4]{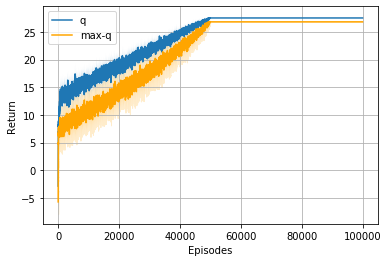} \label{v2_comparison_curves_a}}
  \subfloat[][Maximum reward per episode]{\includegraphics[scale=0.4]{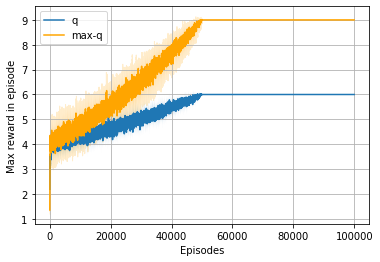}\label{v2_comparison_curves_b}}
\caption{Comparison between Q-learning and Max-Q}
\label{v2_comparison_curves}
\end{figure}

Figure \ref{v2_comparison_curves} shows a quantitative comparison between Q-learning and Max-Q. Figure \ref{v2_comparison_curves_a} shows a comparison in terms of the average episodic return. Q-learning achieved optimal performance in terms of cumulative return and therefore has no incentive to direct the agent towards the maximum reward of $+9$. Max-Q on the other hand, converges to the path of following the top row to reach the highest reward of $+9$. This can be seen more clearly in Figure \ref{v2_comparison_curves_b}, which shows a comparison of the maximum reward obtained in each episode. Each curve is averaged over 10 runs, and the shaded region represents 1 standard deviation.

\subsection{Drug Discovery}

We give a brief summary of PGFS (\citet{gottipati2020learning}) for \emph{de novo} drug design here and then incorporate the max-Bellman formulation derived above. PGFS operates in the realm of off-policy continuous action space algorithms. The actor module $\Pi$ that consists of $f$ and $\pi$ networks predicts a continuous action $a$ (that is in the space defined by the feature representations of all second reactants). Specifically, the $f$ network takes in the current state $s$ (reactant-1 $R^{(1)}$) as input and outputs the best reaction template $T$. The $\pi$ network takes in both $R^{(1)}$ and $T$ as inputs and outputs the continuous action $a$. The environment then takes in $a$ and computes $k$ closest valid second reactants ($R^{(2)}$). For each of these $R^{(2)}$s, we compute the corresponding product of the chemical reaction between $R^{(1)}$ and $R^{(2)}$, compute the reward for the obtained product and choose the product (next state, $s_{t+1}$) that corresponds to the highest reward. All these quantities are stored in the replay buffer. The authors leveraged TD3 \citep{TD3} algorithm for updating actor ($f$, $\pi$) and critic ($Q$) networks. More specifically, the following steps are followed after sampling a random minibatch from the buffer (replay memory):

First, the actions for the next time step ($T_{i+1}$ and $a_{i+1}$) are computed by passing the state input ($R^{(1)}_{i+1}$) through the target actor network (i.e, the parameters of the actor networks are not updated in this process). Then, the one step TD target is computed:
\begin{equation*}
    y_i = r_i + \gamma \min_{j=1,2} \text{Critic-target}(\{ R^{(1)}_{i+1}, T_{i+1}\}, a_{i+1})
\end{equation*}
In the proposed approach ``MB (max-Bellman) + PGFS", we compute the one-step TD target as 
\begin{equation*}
    y_i = \max [r_i, \gamma \min_{j=1,2} \text{Critic-target}(\{ R^{(1)}_{i+1}, T_{i+1}\}, a_{i+1})]
\end{equation*}
The critic loss and the policy loss are defined as:
\begin{equation*}
     \mathcal{L}_{\textit{critic}} = \sum_i |y_i - Q(\{R^{(1)}_i, T_i \}, a_i)|^2
\end{equation*}
\begin{equation*}
    \mathcal{L}_{\textit{policy}} = -\sum_i \text{Critic}(R^{(1)}_i, \text{Actor}(R^{(1)}_i))
\end{equation*}
Additionally, like the PGFS algorithm, we also minimize an auxiliary loss to enable stronger gradient updates during the initial phases of training.
\begin{equation*}
    \mathcal{L}_{\textit{auxil}} = -\sum_i (T^{(1)}_{ic}, \log(f(R^{(1)}_{ic})))
\end{equation*}
and, the total actor loss $\mathcal{L}_{\textit{actor}}$ is a summation of the policy loss $\mathcal{L}_{\textit{policy}}$ and auxiliary loss $\mathcal{L}_{\textit{auxil}}$. 
\begin{equation*}
    \mathcal{L}_{\textit{actor}} = \mathcal{L}_{\textit{policy}} + \mathcal{L}_{\textit{auxil}}
\end{equation*}
The parameters $\theta^Q$ of the $Q$-network are updated by minimizing the critic loss $\mathcal{L}_{\textit{critic}}$, and the parameters $\theta^f$ and $\theta^\pi$ of the actor networks $f$ and $\pi$ are updated my minimizing the actor loss $\mathcal{L}_{\textit{actor}}$. A more detailed description of the algorithm, pseudo code and hyper parameters used is given in Section \ref{sec:appendix-pgfs} of the Appendix.

We compared the performance of the proposed formulation ``MB (max-Bellman) + PGFS'' with PGFS, and random search (RS) where reaction templates and reactants are chosen randomly at every time step, starting from initial reactant randomly sampled from ENAMINE dataset (which contains a set of roughly 150,000 reactants). We evaluated the approach on five rewards: QED (that measures the drug like-ness: \citet{bickerton2012quantifying}), clogP (that measures lipophilicity: \citet{you2018graph}) and activity predictions against three targets (\citet{gottipati2020learning}): HIV-RT, HIV-INT, HIV-CCR5. While PGFS demonstrated state-of-the-art performance on all these rewards across different metrics (maximum reward achieved, mean of top-100 rewards, performance on validation set, etc.) when compared to the existing \emph{de novo} drug design approaches (including the ones that do not implicitly or explicitly account for synthesizability and just optimize directly for the given reward function), we show that the proposed approach (PGFS+MB) performed better than PGFS (i.e, better than the existing state-of-the-art methods) 
on all the five rewards across all the metrics. For a fairness in comparison, like PGFS, we only performed hyper parameter tuning over policy noise and noise clip and trained only for 400,000 time steps. However, in the proposed formulation, we noticed that the performance is sensitive to the discount factor $\gamma$ and the optimal $\gamma$ is different for each reward.  

Table \ref{performance_table} compares the maximum reward achieved during the entire course of training of 400,000 time steps. We notice that the proposed approach PGFS+MB achieved highest reward compared to existing state-of-the-art approaches. Table \ref{performance_table_100} compares the mean (and standard deviation) of top 100 rewards (i.e., molecules) achieved by each of the methods over the entire course of training with and without applicability domain (AD) (\citet{tropsha2010best}, \citet{gottipati2020learning}).
We again note that the proposed formulation performed better than existing methods on all three rewards both with and without AD. In Figure \ref{fig:inference}, 
we compare the performance based on the rewards achieved starting from a fixed validation set of 2000 initial reactants. For all the three HIV reward functions, we notice that PGFS+MB performed better than existing reaction-based RL approaches (i.e, PGFS and RS) in terms of reward achieved at every time step of the episode (Figure \ref{fig:inference}a), and in terms of maximum reward achieved in each episode (Figure \ref{fig:inference}b).  

\begin{table}[!ht]
\centering
\caption{Performance comparison of reaction based \emph{de novo} drug design algorithms in terms of maximum reward achieved}

\def\mpmn#1#2{{\ensuremath{{#1}\pm{#2}}}}%
    \def\mpmb#1#2{{\ensuremath{{\textbf{#1}}\pm{\textbf{#2}}}}}%

\vskip 0.15in
    \resizebox{0.55\textwidth}{!}{
    \begin{tabular}{cccccc}
        \toprule
        Method & QED & clogP & RT & INT & CCR5 \\
        \midrule
        ENAMINEBB &\textbf{0.948}&5.51&7.49&6.71&8.63\\
        RS &\textbf{0.948}&8.86&7.65&7.25&8.79 (8.86)\\
        PGFS  & \textbf{0.948} & 27.22     & 7.89      & 7.55     &  9.05   \\
       PGFS+MB & \textbf{0.948} & \textbf{27.60} & \textbf{7.97} & \textbf{7.67} & \textbf{9.20 (9.26)} \\
        \bottomrule
    \end{tabular}
    \label{performance_table}
     }
    
\end{table}

\begin{table*}[ht!]
\centering
\caption{Statistics of the top-100 produced molecules with highest predicted HIV scores for every reaction-based method used and Enamine's building blocks }
\resizebox{0.9\textwidth}{!}{
\begin{tabular}{@{}ccccccc@{}}
\toprule
& \multicolumn{3}{c}{NO AD}                              & \multicolumn{3}{c}{AD}            \\ \midrule
\multicolumn{1}{c|}{Scoring}   & RT        & INT       & \multicolumn{1}{c|}{CCR5}      & RT        & INT       & CCR5      \\ \midrule
\multicolumn{1}{c|}{ENAMINEBB} & $6.87\pm{0.11}$         & $6.32\pm{0.12}$         & \multicolumn{1}{c|}{$7.10\pm{0.27}$}         & $6.87\pm{0.11}$         & $6.32\pm{0.12}$          & $6.89\pm{0.32}$         \\
\multicolumn{1}{c|}{RS}        & $7.39\pm{0.10}$ & $6.87\pm{0.13}$ & \multicolumn{1}{c|}{$8.65\pm{0.06}$} & $7.31\pm{0.11}$ & $6.87\pm{0.13}$ & $8.56\pm{0.08}$ \\
\multicolumn{1}{c|}{PGFS}       & $7.81\pm{0.03}$ & $7.16\pm{0.09}$ & \multicolumn{1}{c|}{$8.96\pm{0.04}$} & $7.63\pm{0.09}$ & $7.15\pm{0.08}$ & $8.93\pm{0.05}$ \\ 
\multicolumn{1}{c|}{PGFS+MB}       & $\textbf{7.81}\pm{0.01}$ & $\textbf{7.51}\pm{0.02}$ & \multicolumn{1}{c|}{$\textbf{9.06}\pm{0.01}$} & $\textbf{7.75}\pm{0.04}$ & $\textbf{7.51}\pm{0.04}$ & $\textbf{9.01}\pm{0.05}$ \\ 
\end{tabular}
\label{performance_table_100}
}
\end{table*}

\begin{figure}[!ht]
    \centering
    \begin{subfigure}{0.03\linewidth}
      \caption{}
    \end{subfigure}
    \begin{subfigure}{0.3\linewidth}
        \label{fig:hiv_int_stat}
        \includegraphics[scale=0.18]{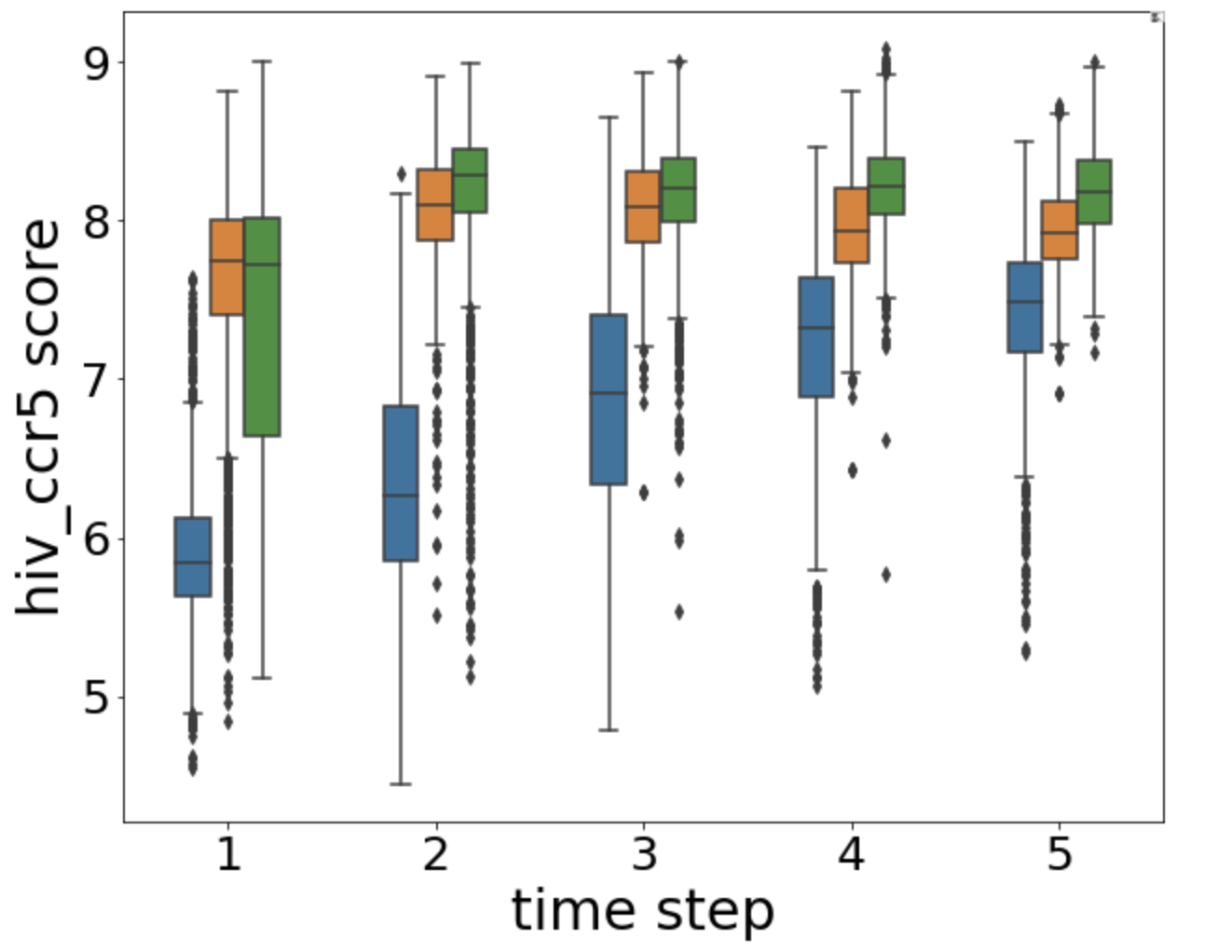}
    \end{subfigure}
    \begin{subfigure}{0.3\linewidth}
        \label{fig:hiv_ccr5_stat}
        \includegraphics[scale=0.18]{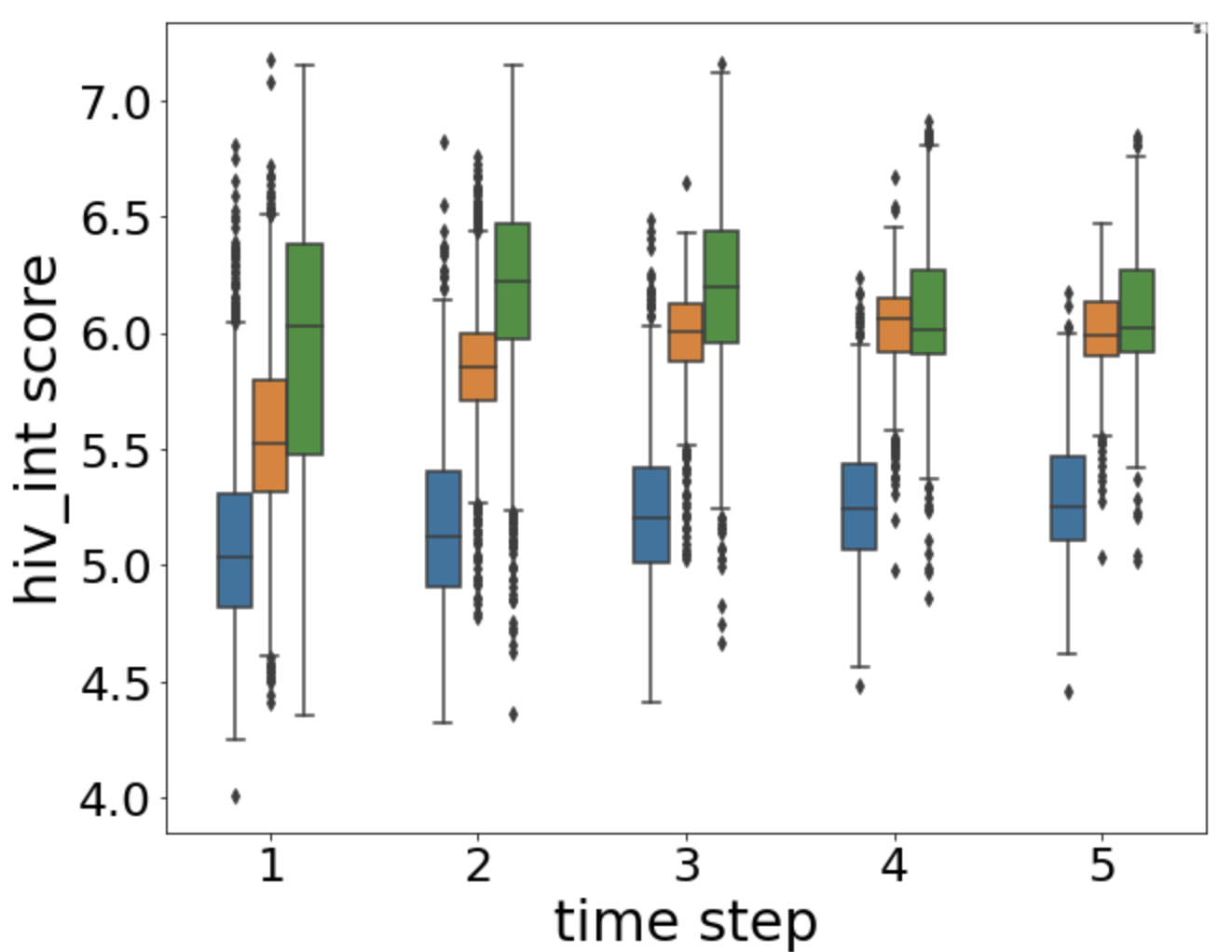}
    \end{subfigure}
    \begin{subfigure}{0.3\linewidth}
        \label{fig:hiv_hiv_stat}
        \includegraphics[scale=0.18]{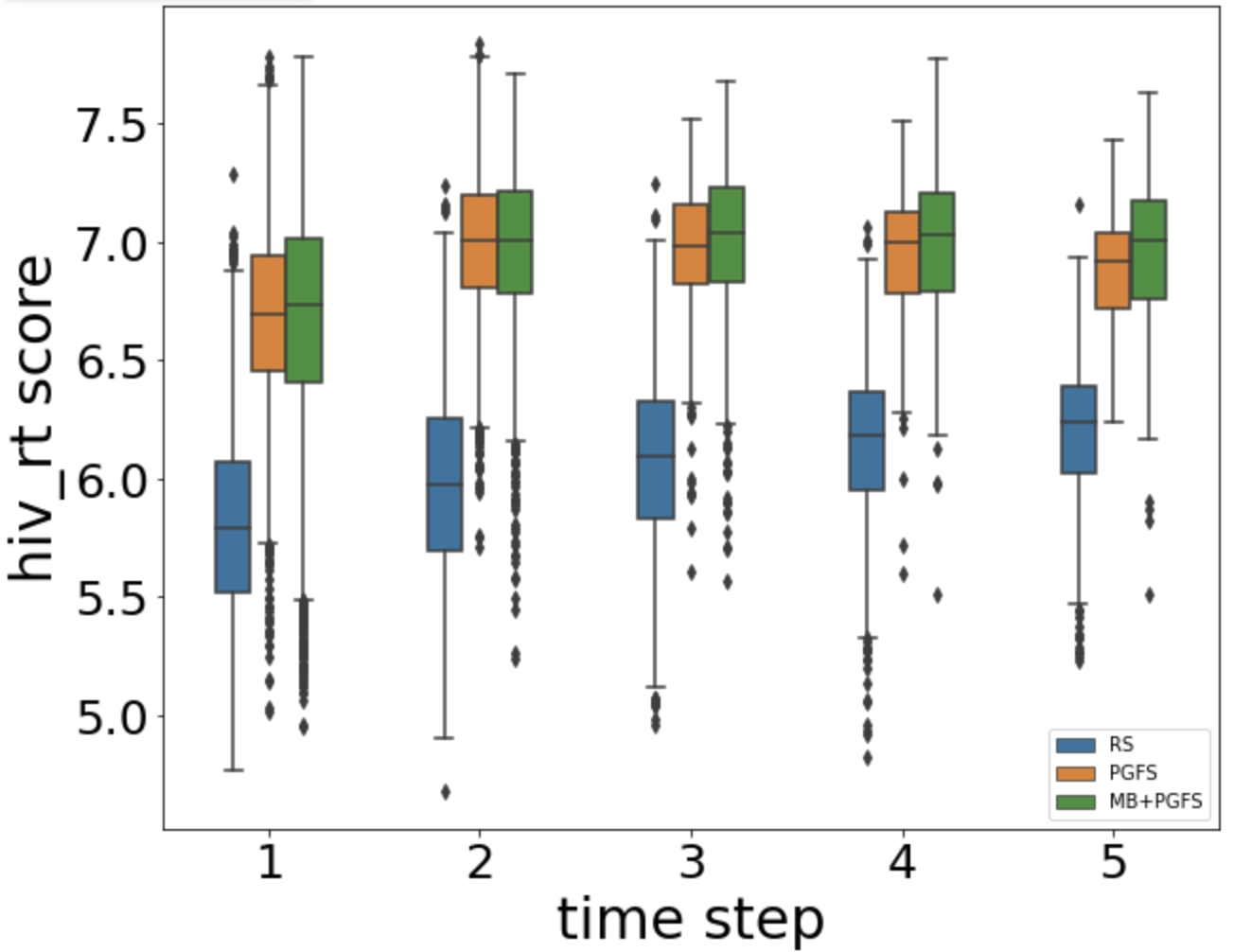}
    \end{subfigure}
    
    \begin{subfigure}{0.03\linewidth}
      \caption{}
    \end{subfigure}
    \begin{subfigure}{0.3\linewidth}
        \label{fig:hiv_int_dist}
        \includegraphics[scale=0.18]{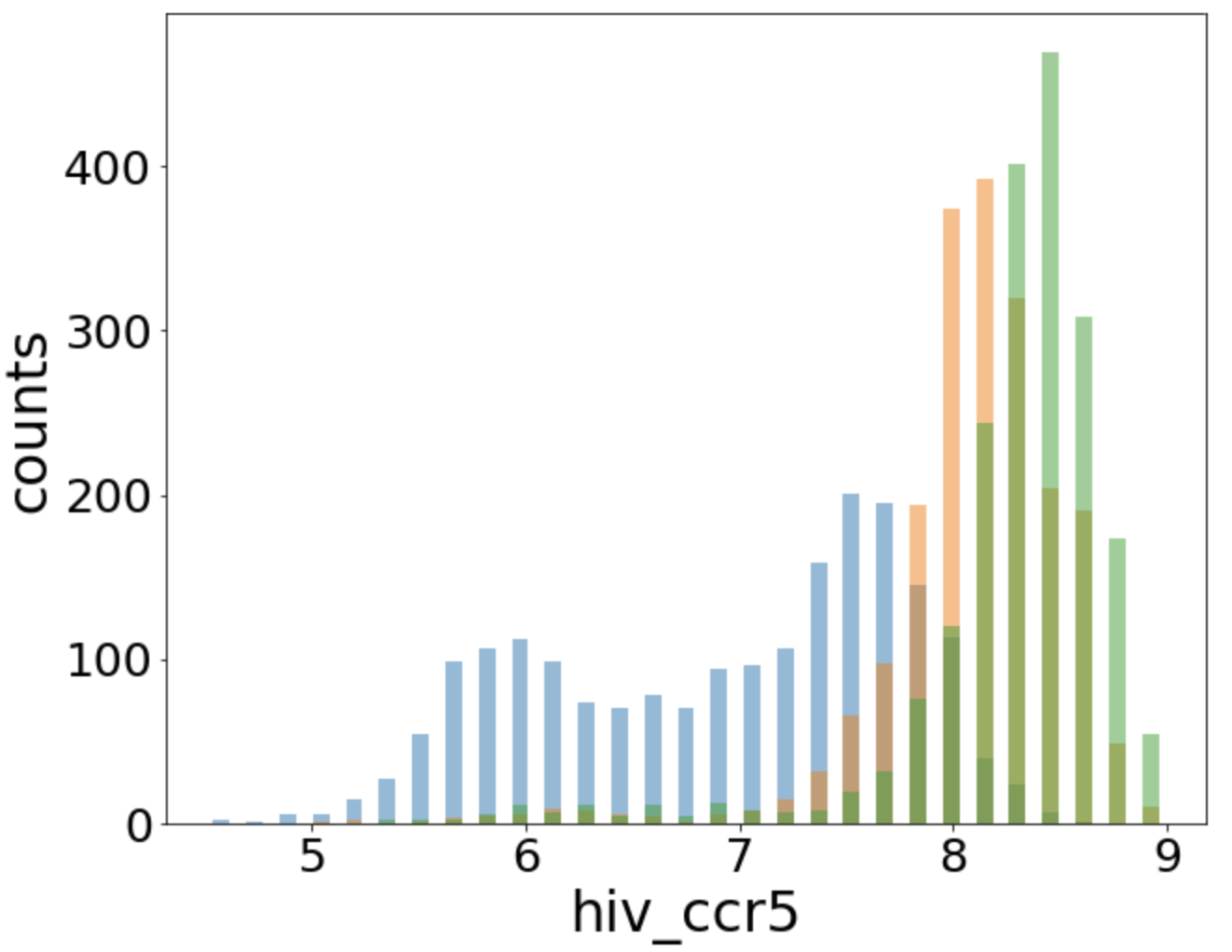}
    \end{subfigure}
    \begin{subfigure}{0.3\linewidth}
        \label{fig:hiv_ccr5_dist}
        \includegraphics[scale=0.18]{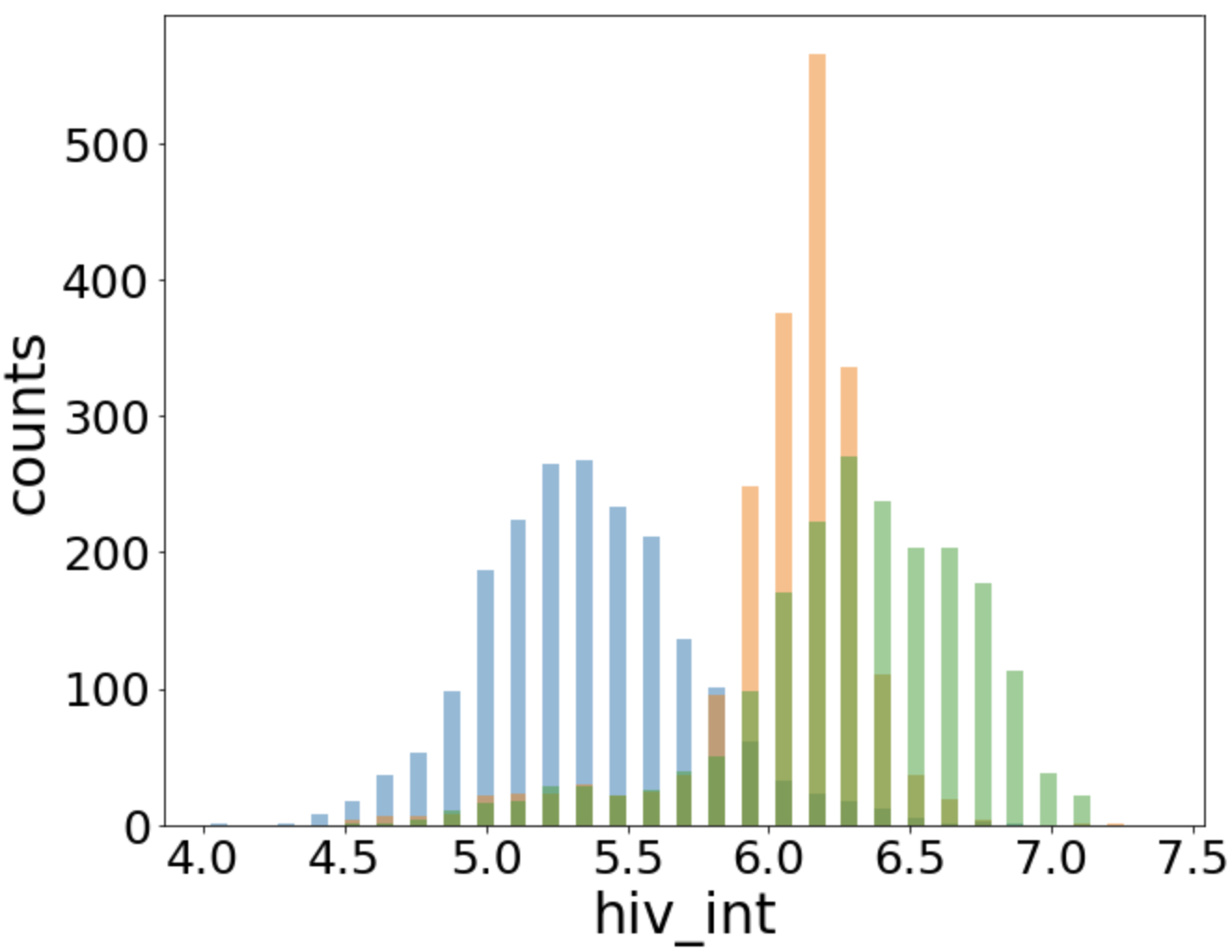}
    \end{subfigure}
    \begin{subfigure}{0.3\linewidth}
        \label{fig:hiv_hiv_dist}
        \includegraphics[scale=0.18]{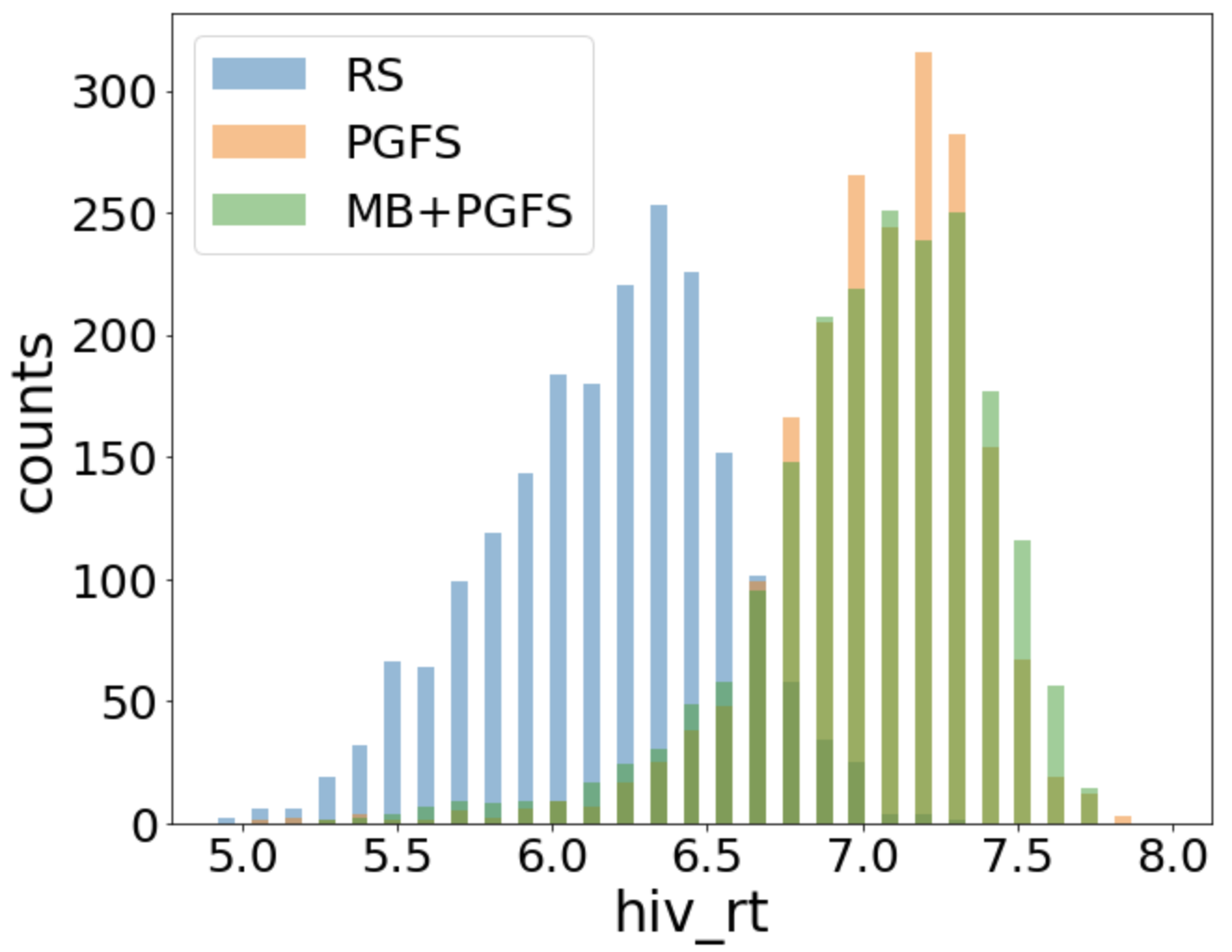}
    \end{subfigure}
    
    \caption{Comparison of the three methods: RS (blue), PGFS (orange) and PGFS+MB (green) based on the rewards achieved starting with the 2000 initial reactants from a fixed validation set.}
\label{fig:inference}
\end{figure}

\section{Conclusion}

In this paper, we introduced a novel functional form of the Bellman equation to optimize for the maximum reward achieved at any time step in an episode. We introduced the corresponding evaluation and optimality operators and proved the convergence of the novel $Q$-learning algorithm (that is obtained using the proposed max-bellman formulation). We further showed that the proposed max-Bellman formulation can be applied to deep reinforcement learning algorithms by demonstrating state-of-the-results on the task of \emph{de novo} drug design across several reward functions and metrics.

\section*{Acknowledgements}
We thank Connor Coley, Boris Sattarov and Karam Thomas for useful discussions and feedback. SC is supported by a Canada CIFAR AI Chair and an NSERC Discovery Grant.
Part of this work has taken place in the Intelligent Robot Learning (IRL) Lab at the University of Alberta, which is supported in part by research grants from the Alberta Machine Intelligence Institute, CIFAR, and NSERC. Computations were performed on the SOSCIP Consortium’s advanced computing platforms. SOSCIP is funded by FedDev Ontario and Ontario academic member institutions.

\clearpage


\begin{thebibliography}{41}
\providecommand{\natexlab}[1]{#1}
\providecommand{\url}[1]{\texttt{#1}}
\expandafter\ifx\csname urlstyle\endcsname\relax
  \providecommand{\doi}[1]{doi: #1}\else
  \providecommand{\doi}{doi: \begingroup \urlstyle{rm}\Url}\fi

\bibitem[Ahn et~al.(2020)Ahn, Kim, Lee, and Shin]{genetic-ahn2020}
S.~Ahn, J.~Kim, H.~Lee, and J.~Shin.
\newblock Guiding deep molecular optimization with genetic exploration, 2020.

\bibitem[Bellman(1954)]{Bellman54}
R.~Bellman.
\newblock Some applications of the theory of dynamic programming - {A} review.
\newblock \emph{Oper. Res.}, 2\penalty0 (3):\penalty0 275--288, 1954.

\bibitem[Bickerton et~al.(2012)Bickerton, Paolini, Besnard, Muresan, and
  Hopkins]{bickerton2012quantifying}
G.~R. Bickerton, G.~V. Paolini, J.~Besnard, S.~Muresan, and A.~L. Hopkins.
\newblock Quantifying the chemical beauty of drugs.
\newblock \emph{Nature chemistry}, 4\penalty0 (2):\penalty0 90, 2012.

\bibitem[Bradshaw et~al.(2019)Bradshaw, Paige, Kusner, Segler, and
  Hern{\'{a}}ndez{-}Lobato]{bradshaw2019}
J.~Bradshaw, B.~Paige, M.~J. Kusner, M.~H.~S. Segler, and J.~M.
  Hern{\'{a}}ndez{-}Lobato.
\newblock A model to search for synthesizable molecules.
\newblock \emph{CoRR}, abs/1906.05221, 2019.

\bibitem[Brown et~al.(2004)Brown, McKay, Gilardoni, and
  Gasteiger]{brown2004graph}
N.~Brown, B.~McKay, F.~Gilardoni, and J.~Gasteiger.
\newblock A graph-based genetic algorithm and its application to the
  multiobjective evolution of median molecules.
\newblock \emph{Journal of chemical information and computer sciences},
  44\penalty0 (3):\penalty0 1079--1087, 2004.

\bibitem[Brown et~al.(2019)Brown, Fiscato, Segler, and
  Vaucher]{brown2019guacamol}
N.~Brown, M.~Fiscato, M.~H. Segler, and A.~C. Vaucher.
\newblock Guacamol: benchmarking models for de novo molecular design.
\newblock \emph{Journal of chemical information and modeling}, 59\penalty0
  (3):\penalty0 1096--1108, 2019.

\bibitem[Chaplot et~al.(2018)Chaplot, Parisotto, and
  Salakhutdinov]{singh2018active}
D.~S. Chaplot, E.~Parisotto, and R.~Salakhutdinov.
\newblock Active neural localization.
\newblock In \emph{International Conference on Learning Representations}, 2018.

\bibitem[Fujimoto et~al.(2018{\natexlab{a}})Fujimoto, van Hoof, and Meger]{TD3}
S.~Fujimoto, H.~van Hoof, and D.~Meger.
\newblock Addressing function approximation error in actor-critic methods.
\newblock \emph{CoRR}, abs/1802.09477, 2018{\natexlab{a}}.

\bibitem[Fujimoto et~al.(2018{\natexlab{b}})Fujimoto, Van~Hoof, and
  Meger]{fujimoto2018addressing}
S.~Fujimoto, H.~Van~Hoof, and D.~Meger.
\newblock Addressing function approximation error in actor-critic methods.
\newblock \emph{arXiv preprint arXiv:1802.09477}, 2018{\natexlab{b}}.

\bibitem[Gao and Coley(2020)]{connornew}
W.~Gao and C.~W. Coley.
\newblock The synthesizability of molecules proposed by generative models.
\newblock \emph{Journal of Chemical Information and Modeling}, Apr 2020.
\newblock ISSN 1549-960X.
\newblock \doi{10.1021/acs.jcim.0c00174}.

\bibitem[G{\'o}mez-Bombarelli et~al.(2018)G{\'o}mez-Bombarelli, Wei, Duvenaud,
  Hern{\'a}ndez-Lobato, S{\'a}nchez-Lengeling, Sheberla, Aguilera-Iparraguirre,
  Hirzel, Adams, and Aspuru-Guzik]{gomez2018automatic}
R.~G{\'o}mez-Bombarelli, J.~N. Wei, D.~Duvenaud, J.~M. Hern{\'a}ndez-Lobato,
  B.~S{\'a}nchez-Lengeling, D.~Sheberla, J.~Aguilera-Iparraguirre, T.~D.
  Hirzel, R.~P. Adams, and A.~Aspuru-Guzik.
\newblock Automatic chemical design using a data-driven continuous
  representation of molecules.
\newblock \emph{ACS central science}, 4\penalty0 (2):\penalty0 268--276, 2018.

\bibitem[{Gottipati} et~al.(2019){Gottipati}, {Seo}, {Bhatt}, {Mai}, {Murthy},
  and {Paull}]{dal}
S.~K. {Gottipati}, K.~{Seo}, D.~{Bhatt}, V.~{Mai}, K.~{Murthy}, and L.~{Paull}.
\newblock Deep active localization.
\newblock \emph{IEEE Robotics and Automation Letters}, 4\penalty0 (4):\penalty0
  4394--4401, 2019.

\bibitem[Gottipati et~al.(2020)Gottipati, Sattarov, Niu, Pathak, Wei, Liu,
  Thomas, Blackburn, Coley, Tang, Chandar, and Bengio]{gottipati2020learning}
S.~K. Gottipati, B.~Sattarov, S.~Niu, Y.~Pathak, H.~Wei, S.~Liu, K.~M.~J.
  Thomas, S.~Blackburn, C.~W. Coley, J.~Tang, S.~Chandar, and Y.~Bengio.
\newblock Learning to navigate the synthetically accessible chemical space
  using reinforcement learning.
\newblock In \emph{Proceedings of the 37th International Conference on
  International Conference on Machine Learning}, ICML'20, 2020.

\bibitem[Griffiths and Hernández-Lobato(2020)]{Ryan2020}
R.-R. Griffiths and J.~M. Hernández-Lobato.
\newblock Constrained bayesian optimization for automatic chemical design using
  variational autoencoders.
\newblock \emph{Chem. Sci.}, 11:\penalty0 577--586, 2020.

\bibitem[Haarnoja et~al.(2018)Haarnoja, Zhou, Abbeel, and
  Levine]{haarnoja2018soft}
T.~Haarnoja, A.~Zhou, P.~Abbeel, and S.~Levine.
\newblock Soft actor-critic: Off-policy maximum entropy deep reinforcement
  learning with a stochastic actor.
\newblock \emph{arXiv preprint arXiv:1801.01290}, 2018.

\bibitem[Howard(1960)]{mdp}
R.~A. Howard.
\newblock \emph{Dynamic Programming and Markov Processes}.
\newblock MIT Press, Cambridge, MA, 1960.

\bibitem[Jaakkola et~al.(1993)Jaakkola, Jordan, and
  Singh]{Qlearning-satinder93}
T.~Jaakkola, M.~I. Jordan, and S.~P. Singh.
\newblock Convergence of stochastic iterative dynamic programming algorithms.
\newblock In \emph{Proceedings of the 6th International Conference on Neural
  Information Processing Systems}, NIPS’93, page 703–710, San Francisco,
  CA, USA, 1993. Morgan Kaufmann Publishers Inc.

\bibitem[Jensen(2019)]{jensen2019graph}
J.~H. Jensen.
\newblock A graph-based genetic algorithm and generative model/monte carlo tree
  search for the exploration of chemical space.
\newblock \emph{Chemical science}, 10\penalty0 (12):\penalty0 3567--3572, 2019.

\bibitem[Jin et~al.(2018)Jin, Barzilay, and Jaakkola]{jtvae}
W.~Jin, R.~Barzilay, and T.~Jaakkola.
\newblock Junction tree variational autoencoder for molecular graph generation.
\newblock In J.~Dy and A.~Krause, editors, \emph{Proceedings of the 35th
  International Conference on Machine Learning}, volume~80 of \emph{Proceedings
  of Machine Learning Research}, pages 2323--2332, Stockholmsmässan, Stockholm
  Sweden, 10--15 Jul 2018. PMLR.

\bibitem[Kamihigashi and Le~Van(2015)]{kamihigashi2015necessary}
T.~Kamihigashi and C.~Le~Van.
\newblock Necessary and sufficient conditions for a solution of the bellman
  equation to be the value function: A general principle.
\newblock 2015.

\bibitem[Koch et~al.(2019)Koch, Duigou, and Faulon]{greenchem800474}
M.~Koch, T.~Duigou, and J.-L. Faulon.
\newblock Reinforcement learning for bio-retrosynthesis.
\newblock \emph{bioRxiv}, 2019.

\bibitem[Korovina et~al.(2019)Korovina, Xu, Kandasamy, Neiswanger, Poczos,
  Schneider, and Xing]{korovina}
K.~Korovina, S.~Xu, K.~Kandasamy, W.~Neiswanger, B.~Poczos, J.~Schneider, and
  E.~P. Xing.
\newblock {ChemBO}: {Bayesian} {Optimization} of {Small} {Organic} {Molecules}
  with {Synthesizable} {Recommendations}.
\newblock \emph{arXiv:1908.01425 [physics, stat]}, Aug. 2019.
\newblock arXiv: 1908.01425.

\bibitem[Mnih et~al.(2015)Mnih, Kavukcuoglu, Silver, Rusu, Veness, Bellemare,
  Graves, Riedmiller, Fidjeland, Ostrovski, et~al.]{mnih2015human}
V.~Mnih, K.~Kavukcuoglu, D.~Silver, A.~A. Rusu, J.~Veness, M.~G. Bellemare,
  A.~Graves, M.~Riedmiller, A.~K. Fidjeland, G.~Ostrovski, et~al.
\newblock Human-level control through deep reinforcement learning.
\newblock \emph{nature}, 518\penalty0 (7540):\penalty0 529--533, 2015.

\bibitem[Mnih et~al.(2016)Mnih, Badia, Mirza, Graves, Lillicrap, Harley,
  Silver, and Kavukcuoglu]{mnih2016asynchronous}
V.~Mnih, A.~P. Badia, M.~Mirza, A.~Graves, T.~Lillicrap, T.~Harley, D.~Silver,
  and K.~Kavukcuoglu.
\newblock Asynchronous methods for deep reinforcement learning.
\newblock In \emph{International conference on machine learning}, pages
  1928--1937, 2016.

\bibitem[Olivecrona et~al.(2017)Olivecrona, Blaschke, Engkvist, and
  Chen]{olivecrona2017molecular}
M.~Olivecrona, T.~Blaschke, O.~Engkvist, and H.~Chen.
\newblock Molecular de-novo design through deep reinforcement learning.
\newblock \emph{Journal of cheminformatics}, 9\penalty0 (1):\penalty0 48, 2017.

\bibitem[Petersen(2020)]{petersen2020deep}
B.~K. Petersen.
\newblock Deep symbolic regression: Recovering mathematical expressions from
  data via risk-seeking policy gradients, 2020.

\bibitem[Popova et~al.(2018)Popova, Isayev, and Tropsha]{popova2018deep}
M.~Popova, O.~Isayev, and A.~Tropsha.
\newblock Deep reinforcement learning for de novo drug design.
\newblock \emph{Science advances}, 4\penalty0 (7):\penalty0 eaap7885, 2018.

\bibitem[Quah and Quek(2006)]{QUAH2006351}
K.~Quah and C.~Quek.
\newblock Maximum reward reinforcement learning: A non-cumulative reward
  criterion.
\newblock \emph{Expert Systems with Applications}, 31\penalty0 (2):\penalty0
  351 -- 359, 2006.
\newblock ISSN 0957-4174.
\newblock \doi{https://doi.org/10.1016/j.eswa.2005.09.054}.
\newblock URL
  \url{http://www.sciencedirect.com/science/article/pii/S0957417405002228}.

\bibitem[Rolnick et~al.(2019)Rolnick, Donti, Kaack, Kochanski, Lacoste,
  Sankaran, Ross, Milojevic-Dupont, Jaques, Waldman-Brown,
  et~al.]{rolnick2019tackling}
D.~Rolnick, P.~L. Donti, L.~H. Kaack, K.~Kochanski, A.~Lacoste, K.~Sankaran,
  A.~S. Ross, N.~Milojevic-Dupont, N.~Jaques, A.~Waldman-Brown, et~al.
\newblock Tackling climate change with machine learning.
\newblock \emph{arXiv preprint arXiv:1906.05433}, 2019.

\bibitem[Schulman et~al.(2017)Schulman, Wolski, Dhariwal, Radford, and
  Klimov]{schulman2017proximal}
J.~Schulman, F.~Wolski, P.~Dhariwal, A.~Radford, and O.~Klimov.
\newblock Proximal policy optimization algorithms.
\newblock \emph{arXiv preprint arXiv:1707.06347}, 2017.

\bibitem[Simonovsky and Komodakis(2018)]{simonovsky2018graphvae}
M.~Simonovsky and N.~Komodakis.
\newblock Graphvae: Towards generation of small graphs using variational
  autoencoders.
\newblock In \emph{International Conference on Artificial Neural Networks},
  pages 412--422. Springer, 2018.

\bibitem[Singh et~al.(2000)Singh, Jaakkola, Littman, and
  Szepesv{\'a}ri]{singh2000convergence}
S.~Singh, T.~Jaakkola, M.~L. Littman, and C.~Szepesv{\'a}ri.
\newblock Convergence results for single-step on-policy reinforcement-learning
  algorithms.
\newblock \emph{Machine learning}, 38\penalty0 (3):\penalty0 287--308, 2000.

\bibitem[Szepesvári(1997)]{Szepesvari97}
C.~Szepesvári.
\newblock The asymptotic convergence-rate of q-learning.
\newblock In \emph{NIPS}, pages 1064--1070, 1997.

\bibitem[Tropsha(2010)]{tropsha2010best}
A.~Tropsha.
\newblock Best practices for qsar model development, validation, and
  exploitation.
\newblock \emph{Molecular informatics}, 29\penalty0 (6-7):\penalty0 476--488,
  2010.

\bibitem[Tsitsiklis(1994)]{tsitsiklis1994asynchronous}
J.~N. Tsitsiklis.
\newblock Asynchronous stochastic approximation and q-learning.
\newblock \emph{Machine learning}, 16\penalty0 (3):\penalty0 185--202, 1994.

\bibitem[Udrescu and Tegmark(2020)]{Tegmark2020}
S.-M. Udrescu and M.~Tegmark.
\newblock Ai feynman: A physics-inspired method for symbolic regression.
\newblock \emph{Science Advances}, 6\penalty0 (16), 2020.

\bibitem[Watkins and Dayan(1992)]{Qlearning-Dayan92}
C.~J. C.~H. Watkins and P.~Dayan.
\newblock Technical note: \cal q -learning.
\newblock \emph{Mach. Learn.}, 8\penalty0 (3–4):\penalty0 279–292, May
  1992.

\bibitem[Winter et~al.(2019)Winter, Montanari, Steffen, Briem, No{\'e}, and
  Clevert]{winter2019efficient}
R.~Winter, F.~Montanari, A.~Steffen, H.~Briem, F.~No{\'e}, and D.-A. Clevert.
\newblock Efficient multi-objective molecular optimization in a continuous
  latent space.
\newblock \emph{Chemical science}, 10\penalty0 (34):\penalty0 8016--8024, 2019.

\bibitem[You et~al.(2018{\natexlab{a}})You, Liu, Ying, Pande, and
  Leskovec]{gcpn}
J.~You, B.~Liu, R.~Ying, V.~S. Pande, and J.~Leskovec.
\newblock Graph convolutional policy network for goal-directed molecular graph
  generation.
\newblock \emph{CoRR}, abs/1806.02473, 2018{\natexlab{a}}.

\bibitem[You et~al.(2018{\natexlab{b}})You, Liu, Ying, Pande, and
  Leskovec]{you2018graph}
J.~You, B.~Liu, Z.~Ying, V.~Pande, and J.~Leskovec.
\newblock Graph convolutional policy network for goal-directed molecular graph
  generation.
\newblock In \emph{Advances in neural information processing systems}, pages
  6410--6421, 2018{\natexlab{b}}.

\bibitem[Zhou et~al.(2018)Zhou, Kearnes, Li, Zare, and Riley]{MolDQN}
Z.~Zhou, S.~M. Kearnes, L.~Li, R.~N. Zare, and P.~Riley.
\newblock Optimization of molecules via deep reinforcement learning.
\newblock \emph{CoRR}, abs/1810.08678, 2018.

\end{thebibliography}

\clearpage
\appendix

\section{Proof - Monotonicity and Contraction properties of $\M^\pi$}
\label{sec:proof-m-pi}

\begin{proof}

We will prove the monotonicity and contraction properties of $\M^\pi$

\textbf{Monotonicity}: Let $Q_{1}$ and $Q_{2}$ be two functions such that $Q_{1}(s, a) \geq Q_{2}(s, a)$ for any state-action pair $(s, a) \in \calS \times \calA$.  
By linearity of expectation, we have: 
\begin{equation*}
    \gamma \E_{ \substack{ s' \sim P(\cdot \mid s, a)}} \left[ Q_{1}(s', a')\right] \geq \gamma \E_{ \substack{ s' \sim P(\cdot \mid s, a)}} \left[ Q_{2}(s', a')\right], \forall (s, a)
\end{equation*}
Since,
\begin{equation*}
    \M^\pi Q_{1}(s,a) \geq \gamma \E_{ \substack{ s' \sim P(\cdot \mid s, a)}} \left[ Q_{1}(s', a')\right]
\end{equation*}
we get:
\begin{equation*}
    \M^\pi Q_{1}(s,a) \geq \gamma \E_{ \substack{ s' \sim P(\cdot \mid s, a)}} \left[ Q_{2}(s', a')\right]
\end{equation*}
Moreover, because $\M^\pi Q_{1}(s,a) \geq r(s, a)$, we obtain:
\begin{equation*}
    \M^\pi Q_{1}(s,a) \geq \max \left( r(s, a), \gamma \E_{ \substack{ s' \sim P(\cdot \mid s, a)}} \left[ Q_{2}(s', a')\right] \right) = \M^\pi Q_{2}(s, a)
\end{equation*}
which is the desired result.\\

\textbf{Contraction}: Let us denote the expected action-value function of the next state by $f_i(s,a)$, obtaining the following equation:  
$$ f_i(s, a) = \gamma \E_{ \substack{ s' \sim P(\cdot \mid s, a) \\ a' \sim \pi(\cdot \mid s')}} \left[ Q_{i}(s', a')\right]) $$ 
By using the fact that $\max(x, y) = 0.5 (x+y + |x-y|), \forall (x, y) \in \R^2$, we obtain 
\begin{align*}
    \max(r, f_1) - \max(r, f_2)
    & = 0.5 \left(r + f_1 + |r - f_1| \right) - 0.5 \left(r + f_2 + |r-f_2| \right) \\
    & = 0.5 \left(  f_1 - f_2 + |r - f_1| - |r-f_2| \right) \\
    & \leq  0.5 \left( f_1 - f_2 + | r - f_1 - (r-f_2) | \right) \\
    & = 0.5  \left( f_1 - f_2 + |f_1 -f_2 | \right) \\
    & \leq |f_1 -f_2 |
\end{align*}

Therefore
\begin{align*}
    \|\M^\pi Q_{1} - \M^\pi Q_{2} \|_\infty & = \|\max(r, f_1) - \max(r, f_2) \|_\infty \\
    & \leq  \| f_1 - f_2  \|_\infty \\ 
    & \leq  \gamma \| \E_{a'} Q_{1}(\cdot, a') - \E_{a'} Q_{2}(\cdot, a')  \|_\infty \\
    & =  \gamma \| \E_{a'} (Q_{1}(\cdot, a') - Q_{2}(\cdot, a'))  \|_\infty \\ 
    & \leq  \gamma \| \max_{a'} (Q_{1}(\cdot, a') - Q_{2}(\cdot, a'))  \|_\infty \\
    & \leq  \gamma \max_{a'} \| (Q_{1}(\cdot, a') - Q_{2}(\cdot, a'))  \|_\infty \\
    & = \gamma \| Q_{1} - Q_{2} \|_\infty 
\end{align*}

\end{proof}

\section{Pseudo code - PGFS+MB}
\label{sec:appendix-pgfs}

In this section, we explain the PGFS+MB algorithm in detail and provide the pseudo code. 

The actor module $\Pi$ consists of two networks $f$ and $\pi$. The role of the actor module is to compute the action $a$ for a given state $s$. In this case, the state is the reactant $R^{(1)}$, and the action outputs are reaction template $T$ and a tensor $a$ in the space defined by feature representation of all second reactants. First, the reactant $R^{(1)}$ is passed through the $f$-network to compute the template tensor $T$ that contains the probability of each of the reaction templates. 
$$ T = f(R^{(1)}) $$
For a given reactant $R^{(1)}$, only few reaction templates are eligible to participate in a reaction involving $R^{(1)}$. Thus, all the invalid reaction templates are masked off by multiplying element-wise with a template mask $T_{\text{mask}}$, which is a binary tensor with value 1 if its a valid template, 0 otherwise.
$$ T = T \odot T_{mask} $$ 
Finally, the reaction template $T$ in one-hot tensor format is obtained by applying Gumbel softmax operation to the masked off template $T$. It is parameterized by a temperature parameter $\tau$ that is slowly decayed from 1.0 to 0.1 
$$ T = \text{GumbelSoftmax}(T, \tau) $$
The one-hot template along with the reactant $R^{(1)}$ is passed through the $\pi$ network to obtain the action $a$ in the space defined by feature representation of all second reactants.
$$ a = \pi(R^{(1)}, T) $$
The critic module consists of the $Q$-network and computes the $Q(s,a)$ values. In this case, it takes in the reactant $R^{(1)}$, reaction template $T$, action $a$ and compute its $Q$ value: $Q(R^{(1)}, T, a)$.

For the fairness in comparison, we used the exact same network sizes as described in the PGFS paper i.e, The $f$-network has four fully connected layers with 256, 128, 128 neurons in the hidden layers. The $\pi$ network has four fully connected layers with 256, 256, 167 neurons in the hidden layers. All the hidden layers use ReLU activation whereas the final layer uses tanh activation. The $Q$-network also has four fully connected layers with 256, 64, 16 neurons in the hidden layers, with ReLU activation for all the hidden layers and linear activation for the final layer.

The environment takes in the current state $s$ and action $a$ and computes the next state and reward. 
First, it computes the set of second reactants that are eligible to participate in a reaction involving chosen reaction template $T$
$$ \mathscr{R}^{(2)} = \text{GetValidReactants}(T) $$
The $k$ valid reactants closest to the action $a$ are then obtained by passing the action $a$ and set of valid second reactants $\mathscr{R}^{(2)}$ through the k nearest neighbours module.
$$ \mathscr{A} = \text{kNN}(a, \mathscr{R}^{(2)}) $$
For each of these $k$ second reactants, we compute the corresponding products $R^{(1)}_{t+1}$ obtained involving the reactant $R^{(1)}$ and reaction template $T$ by passing them through a forward reaction predictor module, and then compute the corresponding rewards by passing the obtained products through a scoring function prediction module.
$$ \mathscr{R}^{(1)}_{t+1} = \text{ForwardReaction}(R^{(1)}, T, \mathscr{A}) $$
$$ \mathscr{R}ewards = \text{ScoringFunction}(\mathscr{R}^{(1)}_{t+1}) $$
Then, the product and the reward corresponding to the maximum reward are chosen and returned by the environment. In all our experiments, we use $k=1$.

During the optimization (``backward'') phase, we compute the actions for next time step $T_{i+1}, a_{i+1}$ using target actor network on a randomly sampled mini-batch.
$$ T_{i+1}, a_{i+1} = \text{Actor-target}(R^{(1)}_{i+1}) $$

We then compute one-step TD (temporal difference) target $y_i$ (using the proposed max-Bellman formulation) as the maximum of reward at the current time step and discounted $Q$ value (computed by critic-target) for the next state, next action pair. To incorporate the clipped double Q-learning formulation used in TD3 (\citet{TD3}) to prevent over-estimation bias, we use two critics and only take the minimum of the two critics.   
$$ y_i = \max [r_i, \gamma \min_{j=1,2} \text{Critic-target}(\{ R^{(1)}_{i+1}, T_{i+1}\}, a_{i+1})]$$ 
Note that this is different from PGFS (\citet{gottipati2020learning}) where the authors compute the TD target using the standard Bellman formulation:
$ y_i = r_i + \gamma \min_{j=1,2} \text{Critic-target}(\{ R^{(1)}_{i+1}, T_{i+1}\}, a_{i+1}) $.
We then compute the critic loss $\mathcal{L}_{\text{critic}}$ as the mean squared error between the one-step TD target $y_i$ and the $Q$-value (computed by critic) of the current state, action pair.
$$ \mathcal{L}_{\text{critic}} = \sum_i |y_i - \textsc{Critic}(R^{(1)}_i, T_i, a_i)|^2 $$
The policy loss $\mathcal{L}_{\textit{policy}}$ is negative of the critic value of the state, action pair where the actions are computed by the current version of the actor network
$$ \mathcal{L}_{\textit{policy}} = -\sum_i \textsc{Critic}(R^{(1)}_i, \textsc{Actor}(R^{(1)}_i)) $$

Like in PGFS, to enable faster learning during initial phases of the training, we also minimize an auxiliary loss which is the cross entropy loss between the predicted template and the actual valid template 
$$ \mathcal{L}_{\textit{auxil}} = -\sum_i (T^{(1)}_{i}, log(f(R^{(1)}_{i}))) $$
Thus, the total actor loss is the sum of policy loss and the auxiliary loss 
$$ \mathcal{L}_{\textit{actor}} = \mathcal{L}_{\textit{policy}} + \mathcal{L}_{\textit{auxil}} $$
The parameters of all the actor and critic networks ($f$, $\pi$, $Q$) are updated by minimizing the actor and critic losses respectively. 
$$ \min \mathcal{L}_{\text{actor}}, \mathcal{L}_{\text{critic}} $$ 

\begin{algorithm}[tbh]
\caption{PGFS+MB}
\begin{algorithmic}[1]
\label{alg:CaQL}

\Procedure{Actor}{$R^{(1)}$}
\State $T \gets f(R^{(1)})$
\State $T \gets T \odot T_{mask}$
\State $T \gets \text{GumbelSoftmax}(T, \tau)$
\State $a \gets \pi(R^{(1)}, T)$
\State return $T, a$
\EndProcedure

\Procedure{Critic}{$R^{(1)}$, $T$, $a$}
\State return $Q(R^{(1)}, T, a)$
\EndProcedure

\Procedure{env.step}{$R^{(1)}, T, a$}
\State $\mathscr{R}^{(2)} \gets$ GetValidReactants$(T) $
\State $\mathscr{A} \gets$ kNN$(a, \mathscr{R}^{(2)}) $
\State $\mathscr{R}^{(1)}_{t+1} \gets$ ForwardReaction$(R^{(1)}, T, \mathscr{A})$
\State $\mathscr{R}ewards \gets$ ScoringFunction$(\mathscr{R}^{(1)}_{t+1})$
\State $r_t, R^{(1)}_{t+1}, done \gets \argmax \mathscr{R}ewards$
\State return $R^{(1)}_{t+1}, r_t$, done 
\EndProcedure

\Procedure{backward}{buffer minibatch}

\State $T_{i+1}, a_{i+1} \gets$ Actor-target$(R^{(1)}_{i+1}$)
\State $y_i \gets \max [r_i, \gamma \min_{j=1,2}$ Critic-target$(\{ R^{(1)}_{i+1}, T_{i+1}\}, a_{i+1})]$ 
\State $\mathcal{L}_{\text{critic}} \gets \sum_i |y_i - \textsc{Critic}(R^{(1)}_i, T_i, a_i)|^2 $
\State $\mathcal{L}_{\textit{policy}} \gets -\sum_i \textsc{Critic}(R^{(1)}_i, \textsc{Actor}(R^{(1)}_i))$
\State $\mathcal{L}_{\textit{auxil}} \gets -\sum_i (T^{(1)}_{i}, log(f(R^{(1)}_{i})))$
\State $\mathcal{L}_{\textit{actor}} \gets \mathcal{L}_{\textit{policy}} + \mathcal{L}_{\textit{auxil}}$
\State $ \min \mathcal{L}_{\text{actor}}, \mathcal{L}_{\text{critic}}$ 

\EndProcedure

\Procedure{main}{$f$, $\pi$, $Q$}

\For{episode = 1, M}
\State sample $R^{(1)}_0$
\For{t = 0, N}

\State $T_t, a_t \gets $ \textsc{Actor}($R^{(1)}_t$)
\State $R^{(1)}_{t+1}, r_t$, done $\gets$ env.step$(R^{(1)}_t, T_t, a_t)$

\State store $(R^{(1)}_t, T_t, a_t, R^{(1)}_{t+1}, r_t$, done) in buffer
\State \textit{minibatch} $\gets$ random sample from buffer
\State \textsc{Backward}(\textit{minibatch})

\EndFor
\EndFor
\EndProcedure
\end{algorithmic}
\end{algorithm}

\end{document}